\newtheorem{theorem}{Theorem}
\title{\LARGE \bf
A Recurrent Differentiable Engine for Modeling Tensegrity Robots\\ Trainable with Low-Frequency Data}
\author{Kun Wang, Mridul Aanjaneya and Kostas Bekris
\thanks{The authors are with the Department of Computer Science, Rutgers University, NJ 08901, USA. Email:
        {\tt\small {kun.wang2012, mridul.aanjaneya, kostas.bekris}@rutgers.edu}. This work has been partially supported by NSF award IIS 1956027, IIS-2132972, CCF-2110861 and the Rutgers University startup grant.}%
}
\begin{document}
\maketitle
\thispagestyle{empty}
\pagestyle{empty}


\begin{abstract}
Tensegrity robots, composed of rigid rods and flexible cables, are difficult to accurately model and control given the presence of complex dynamics and  high number of DoFs. Differentiable physics engines have been recently proposed as a data-driven approach for model identification of such complex robotic systems. These engines are often executed at a high-frequency to achieve accurate simulation. Ground truth trajectories for training differentiable engines, however, are not typically available at such high frequencies due to limitations of real-world sensors. The present work focuses on this frequency mismatch, which impacts the modeling accuracy. We proposed a recurrent structure for a differentiable physics engine of tensegrity robots, which can be trained effectively even with low-frequency trajectories.  To train this new recurrent engine in a robust way, this work introduces relative to prior work: (i) a new implicit integration scheme, (ii) a progressive training pipeline, and (iii) a differentiable collision checker. A model of NASA's icosahedron SUPERballBot on MuJoCo is used as the ground truth system to collect training data. Simulated experiments show that once the recurrent differentiable engine has been trained given the low-frequency trajectories from MuJoCo, it is able to match the behavior of MuJoCo's system. The criterion for success is whether a locomotion strategy learned using the differentiable engine can be transferred back to the ground-truth system and result in a similar motion. Notably, the amount of ground truth data needed to train the differentiable engine, such that the policy is transferable to the ground truth system, is 1\% of the data needed to train the policy directly on the ground-truth system.
\end{abstract}



\section{Introduction}

\emph{Tensegrity robots} are compliant systems composed of rigid  (rods) and flexible elements (cables) connected to form a lightweight deformable structure. Their adaptive and safe features motivate applications, such as manipulation~\cite{lessard2016bio}, locomotion~\cite{sabelhaus2018design}, morphing airfoil~\cite{chen2020design} and spacecraft lander design~\cite{bruce2014superball}. At the same time, they are difficult to accurately model and control due to the high number of DoFs and complex dynamics.  This work is motivated by these exciting robotic platforms and aims to provide scalable solutions for modeling them in a data-driven, but explainable, manner.

Learning effective control policies for tensegrity robots is challenging with model-free solutions since they require a large amount of training data and collecting trajectories from tensegrities is time-consuming, cumbersome and expensive. Thus, a better alternative is to tune a dynamical model, or a simulator, of the system to minimize the difference between trajectories predicted by the model relative to those executed by a robot (henceforth referred to as the \emph{ground-truth system}). This is a \emph{system identification} problem, which is necessary before employing a model-based controller.



\emph{Differentiable physics engines} for dynamic trajectory prediction have emerged as a promising data-driven tool for system identification as they allow for data-driven model inference using backpropagation. Some of them are built entirely on neural networks~\cite{NIPS2016_3147da8a,sanchez2018graph,NEURIPS2018_fd9dd764} and can model many different systems, but are data hungry since they have a large number of hidden variables. Other differentiable engines, similar to this work, are built given first principles for a specific physical system~\cite{de2018end,le2021differentiable,lutter2020differentiable,LutSilWatPet21}, which allows them to be more data-efficient as they need to learn fewer parameters. They are also more explainable since they inform about the physical properties of the underlying system.  

\begin{figure}[t]
\centering
\includegraphics[width=\linewidth]{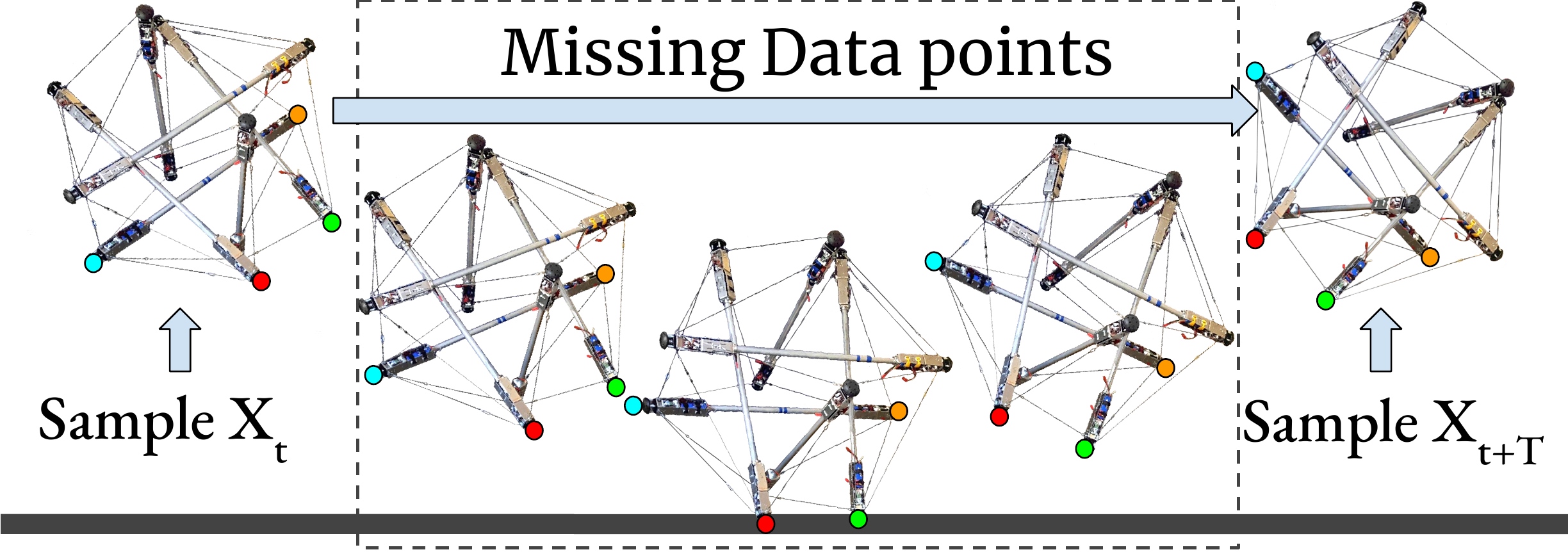}
\includegraphics[height=0.35\linewidth]{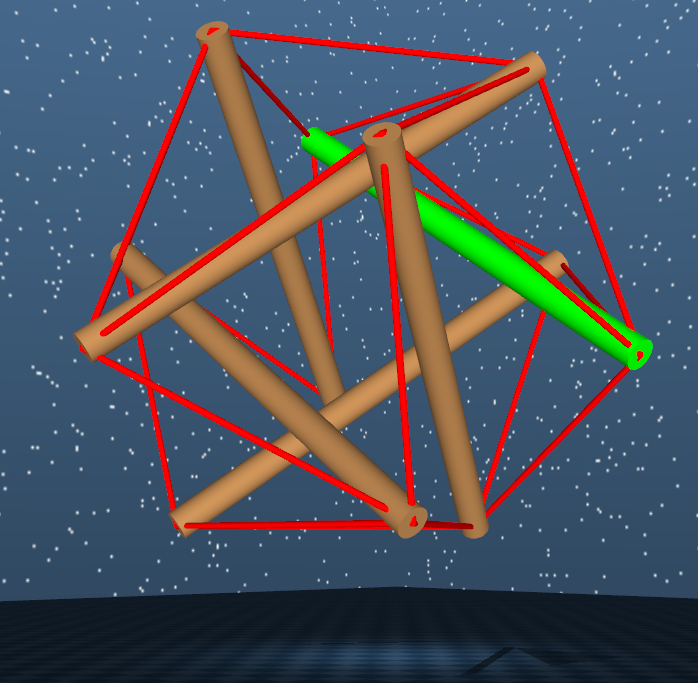}
\includegraphics[height=0.35\linewidth]{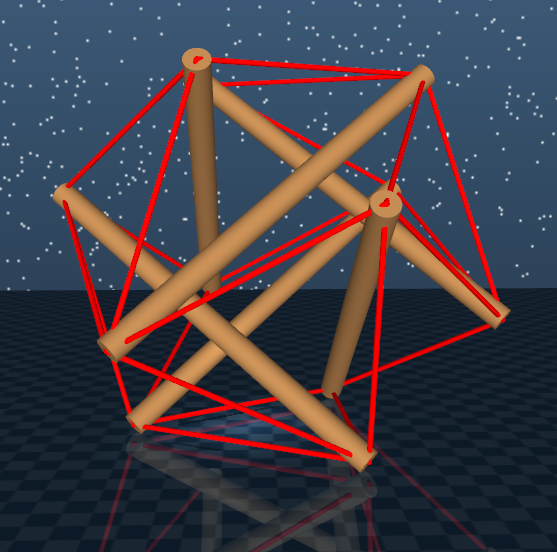}

\vspace{-.05in}
\caption{(top) A sparsely sampled trajectory, given observation frequency $T$, may skip critical states with contacts. (bottom-left) A non-contact setup used first in a progressive training process: the green rod of the tensegrity is kept fixed, while random forces are applied to the other rods. (bottom-right) A MuJoCo terrain environment where the robot is rolling. Training data comes from MuJoCo, which is also used for visualization.}
\label{fig:intro}
\vspace{-.3in}
\end{figure}

In previous work, the authors have developed the first differentiable engine targeted for tensegrity robots and argued its data efficiency \cite{wang2020end}. Nevertheless, critical gaps remain for the effective deployment of such tools on real robots. In particular, the data collection process for real robots commonly utilizes sensors, such as a camera or inertial measurement unit (IMU), where the frequency of the ground-truth data is constrained by the sampling rate of the sensor. In contrast, physics engines are executed at high frequencies for accurate simulation, especially for modeling contacts. This frequency discrepancy raises the challenge that the dynamics model needs to \emph{predict} the missing data points of a discretely sampled sparse trajectory. In addition, the missing data points may include events that are critical to the motion, such as collisions (see Fig.~\ref{fig:intro}(top)). The previous differential engine for tensegrities \cite{wang2020end} used a feed-forward architecture and can only be trained from trajectories sampled at high frequencies, similar to that of the engine's frequency, i.e., in the order of 1000Hz, which beyonds what most sensors can achieve. 

Motivated by this frequency gap, this work proposes a recurrent architecture for a differentiable physics engine targeted for tensegrity robots, which remains explainable and can be trained  effectively even with low-frequency ground-truth data. Beyond the recurrent nature of the engine, this work introduces a new implicit integration and an adaptive semi-implicit integration process as part of the differentiable engine to increase its robustness. Furthermore, a progressive training algorithm is proposed, which avoids gradient explosion and leads to a fast and stable training process. To further accelerate the training process, a fast differentiable collision checker has been implemented as part of the proposed engine. The authors will release the integrated solution as an open-source software package upon acceptance. 


To demonstrate the benefits of the proposed engine, sim2sim experiments are performed as a step towards the application on real robots. In particular, a model of NASA’s icosahedron SUPERballBot on MuJoCo is used as the ground truth system to collect training data. The experiments show that the recurrent differentable engine can match the output of the ground truth system even if it is provided low-frequency data about the MuJoCo trajectories. This paper demonstrates that a control strategy trained on the proposed recurrent engine after identification can be transferred back to the ground truth system and results in a similar motion. Notably, the amount of ground truth data needed to identify the recurrent engine is 100$\times$ less than the data needed to train the policy directly on the ground-truth system. 


\section{Related Work}

\emph{Differentiable physics} is an active area of research.  A hybrid engine with 2-way coupling for both rigid and soft bodies with smoothed frictional contact was proposed in~\cite{geilinger2020add}. A sphere and plane-based differentiable engine has been used for training a controller recurrently but without system identification~\cite{degrave2019differentiable}. A differentiable programming language was proposed in~\cite{hu2019difftaichi}, but requires users to implement the physics engine themselves. To reduce data requirements, analytical differentiable physics engines based on linear complementarity (LCP)~\cite{de2018end} or quadratic programming (QP)~\cite{le2021differentiable} have been proposed. However, they require densely sampled simulation trajectories. Interactions that can be efficiently simulated and differentiated have been studied~\cite{qiao2020scalable}, but interactions represented by neural networks~\cite{sanchez2018graph,hwangbo2019learning,NEURIPS2018_fd9dd764} have been shown to perform poorly for tensegrity robots~\cite{pmlr-v120-wang20b}. The Constrained Recursive Newton-Euler Algorithm (RNEA) ~\cite{lutter2020differentiable,LutSilWatPet21} uses differentiable physics engines for rigid-body chain manipulators~\cite{sutanto2020encoding, sanchez2018graph}. Differentiable dynamical constraints can also be applied to path optimization~\cite{toussaint2018differentiable}, training by image supervision~\cite{murthy2020gradsim,heiden2019interactive}, soft object manipulation~\cite{heiden2021disect}, soft robot control~\cite{9392257,bacher2021design} and quantum molecular control~\cite{wang2020differentiable}. A differentiable engine specifically designed for tensegrity robots~\cite{wang2020end} provides analytically explainable models for both the robot and the ground, but still requires densely sampled ground-truth trajectories for system identification.

\emph{Recurrent structures} use previous outputs as inputs while maintaining hidden states. This feature allows them to be trained on sequences to reflect the underlying temporal dynamics. Thus, recurrent structures have become popular solutions for video prediction~\cite{wu2021motionrnn}, embedded dynamics~\cite{suzuki2021compensation}, trajectory forecasting~\cite{Kothari_2021_CVPR} and translation~\cite{guo2018hierarchical}, etc.

Prior work on \emph{tensegrity locomotion}~\cite{zhang2017deep,luo2018tensegrity,surovik2019adaptive} has achieved complex behaviors, on uneven terrain, using the NTRT simulator~\cite{NTRTSim}, which was manually tuned to match a real platform~\cite{mirletz2015towards,caluwaerts2014design}. Many of these approaches use reinforcement learning to learn policies given sparse inputs, which can be provided by onboard sensors~\cite{luo2018tensegrity} and aim to address the data requirements of RL approach~\cite{surovik2019adaptive}, including by training in simulation. But simulated locomotion is hard to replicate on a real platform, even after hand-tuning, which emphasizes the importance of learning a transferable policy that is the focus of this work.
\emph{Domain randomization} ~\cite{chebotar2019closing,truong2021bi,mehta2020active} is a possible way to close the sim2real gap. Previous domain randomization efforts~\cite{chebotar2019closing} often assume the system parameters are within a Gaussian distribution and use a non-differentiable physics engine. Nevertheless, the parameter distribution here is unknown and the range varies a lot. The current work is a step in mitigating the reality gap by showing sim2sim transfer with low-frequency data. The different, closed-source and unknown physical model of the \emph{MuJoCo} simulator~\cite{todorov2012mujoco} is used as the ground-truth. MuJoCo does not follow the same first principles for modeling contacts as that of the proposed differentiable physics engine. This feature makes MuJoCo a good candidate for a ground-truth system.

\begin{figure}[b]
\vspace{-.25in}
\centering
\begin{overpic}[width=0.5\textwidth]{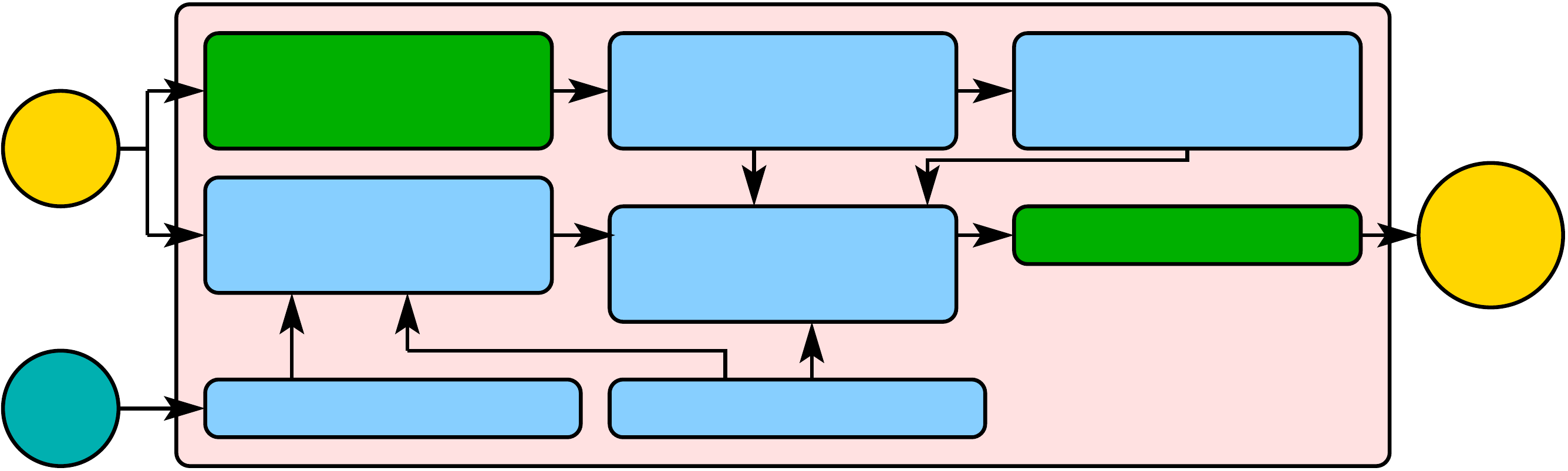}
    \put(2.5,20){$X_t$}
    \put(20,23){DCC}
    \put(72,23){CRG}
    \put(20,13.5){CFG}
    \put(46,23){DIG}
    \put(2.5,3){$U_t$}
    \put(15.5,3){\footnotesize Cable Actuator}
    \put(40,3){\footnotesize Topology Graph}
    \put(65,7){\small Differentiable}
    \put(65,3){\small Physics Engine}
    \put(92,14){$\hat{X}_{t+1}$}
    \put(70,14){\footnotesize Integrator}
    \put(46,12){RAG}
\end{overpic}
\caption{The physics engine takes the current robot state $X_t$ and control $U_t$ as inputs and predicts the next state $\hat{X}_{t+1}$. Compared to previous attempts~\cite{wang2020end}, this work introduces recurrent training (as shown in Fig. \ref{fig:recurent_engine}), a new numerical integrator, a progressive training pipeline and a new Differentiable Collision Checker (DCC) to account for the frequency mismatch. DIG: Dynamic Interaction Graph, CRG: Collision Response Generator, CFG: Cable Force Generator, RAG: Rod Acceleration Generator.}
\label{fig:single_step_engine}
\end{figure}

\vspace{-1mm}
\section{Proposed Engine}
\vspace{-1mm}
At the core of the proposed approach lies a differentiable physics engine, shown in Fig.~\ref{fig:single_step_engine}, which builds on top of previous work ~\cite{wang2020end}. The engine brings together a series of analytical models based on first principles, which are linear and differentiable. The input to the engine is the current robot state $X_t$ and the instantaneous control $U_t$. The engine internally stores a representation of the robot in a static topology graph indicating the connectivity of rods and cables. The control $U_t$ is passed to a Cable Actuator module, which maps the control to desired cable rest-lengths. Together with the topological graph, the cable actuator informs the Cable Force Generator (CFG), which is responsible to predict the forces applied on the rods due to the cables given the latest robot state. In parallel and given the robot state, a Differentiable Collision Checker (DCC) detects collisions and informs a Dynamic Interaction Graph (DIG), which stores the colliding bodies (either rod-to-rod or rod-to-ground) and the corresponding contact points. This information is passed to a Collision Response Generator (CRG), which is responsible to compute the reaction forces applied to the rods. The forces and torques from the cables (as computed by CFG) and those from contacts (as computed by CRG) are forwarded to the Rod Acceleration Generator (RAG), which computes the linear and angular acceleration experienced by the rods. These accelerations and torques are integrated by an Integrator module so as to update the robot state. 

Overall, these models introduce two sets of parameters: robot parameters and contact parameters, which need to be identified. The new contributions of this work include: (i) a recurrent architecture for training this engine with low frequency data; (ii) an implicit integration scheme, (iii) a progressive training pipeline with a new adaptive semi-implicit integration module, and (iv) a differentiable collision checker. These components enable stable and robust recurrent training with low-frequency training data on a high-frequency physics engine.
\vspace{-1mm}
\subsection{Recurrent Differentiable Physics Engine}
\vspace{-1mm}
At a high level, the proposed training pipeline calls the differentiable physics engine in a recurrent fashion through a sequence of temporal connections. With help from Back-Propagation Through Time (BPTT), the engine parameters can be identified given low-frequency sampled trajectories. Figure~\ref{fig:recurent_engine} presents the recurrent nature of the proposed architecture. The trajectory is split to data point tuples, $[(X_t, \bar{U_t}), X_{t+T}]$, where $X_t$ is the system state at time $t$ and $\bar{U_t}$ is the sequence of actions to be executed in the interval $[t,t+T]$. The control sequence $\bar{U_{t}}$ can be split into control signals $U_t,U_{t+1},\ldots,U_{t+T-1}$, which have the same high-frequency as the engine. In recurrent mode, the engine receives as inputs $(X_t, \bar{U_t})$ and generates output $\hat{X}_{t+T}$.  The loss is the mean square error (MSE) between the predicted state $\hat{X}_{t+T}$ and the ground truth state $X_{t+T}$. The gradients of the loss function are then back-propagated through the physics engine and used for updating the engine's parameters. 

\begin{figure}[t]
\centering
\begin{overpic}[width=\columnwidth]{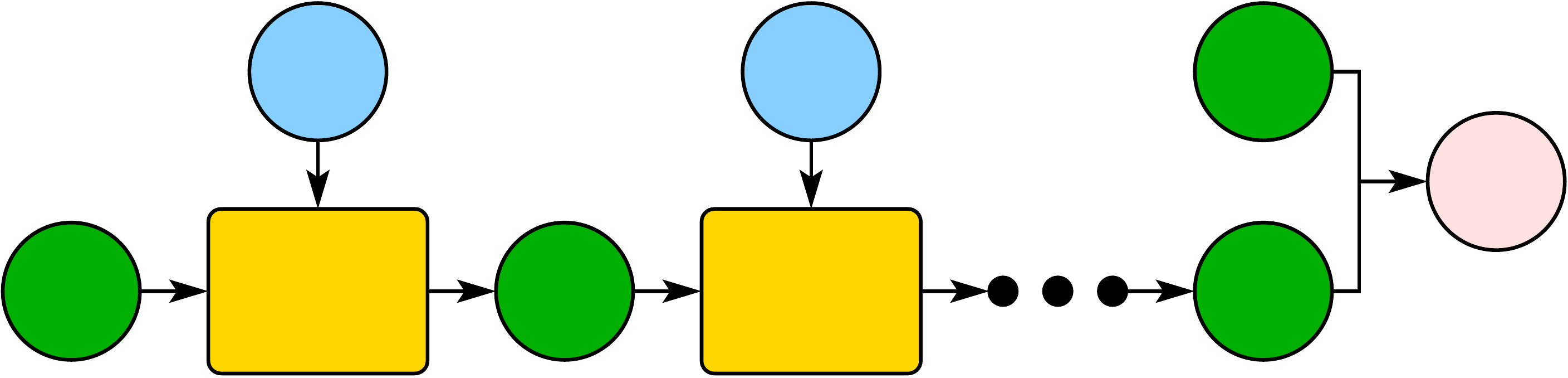}
    \put(3,4){$X_t$}
    \put(18.5,18){$U_t$}
    \put(16.5,4){DPE}
    \put(32.5,4){$\hat{X}_{t+1}$}
    \put(48,18){$U_{t+1}$}
    \put(48,4){DPE}
    \put(77,4){$\hat{X}_{t+T}$}
    \put(77,18){$X_{t+T}$}
    \put(92.5,11.5){\footnotesize Loss}
\end{overpic}
\caption{The recurrent training pipeline is trained via supervision given a state-action tuple $X_t, \bar{U_t}, X_{t+T}$. The engine will generate all missing states between $X_t$ and $X_{t+T}$. The mean square error (MSE) loss between the predicted state $\hat{X}_{t+T}$ and the ground truth $X_{t+T}$ generates the gradients needed to update the engine's internal parameters. DPE: Differentiable Physics Engine.}
\label{fig:recurent_engine}
\vspace{-.3in}
\end{figure}

To adapt this engine for recurrent training and achieve robustness, this work introduces an implicit integration scheme and a differentiable collision checker.
\vspace{-1mm}
\subsection{Implicit Integration}
\vspace{-1mm}
While implicit integration schemes are standard for simulating stiff multi-body systems, their dynamics are typically handled by explicit methods, such as semi-explicit integration~\cite{geilinger2020add}. Semi-explicit integration is not stable for stiff systems, such as tensegrities.  Deriving an implicit system, however, for the whole tensegrity is very complex. For instance, more complex than what has been achieved using a particle-based mesh system~\cite{qiao2020scalable}. To address this complexity, this work follows a modular approach by focusing on the basic elements of tensegrity robots, rods and springs, and derives the following $Ax=b$ form linear system for each time step,where $A$ is a 21x21 matrix and $x, b$ are 21x1 vectors (see appendix\footnote{\href{https://sites.google.com/view/recurrentengine}{https://sites.google.com/view/recurrentengine}} for details):
\begin{align*}
    \begin{bmatrix}
    0 & 1 &  0 & & & & \\
    0 &  0 & 1 & \multicolumn{4}{c}{-\boldsymbol{C}} \\
    -1 & K & k \boldsymbol{B} & & & & \\
      &  &  & 0 & 1 & 0 & 0 \\
    \multicolumn{3}{c}{-\boldsymbol{a}\boldsymbol{e}_1^T} & 1 & -\Delta t  & 0 & 0\\
      &  &  & 0 & 0 & 1 & 0 \\
      &  &  & 0 & 0 & -\Delta t & 1
    \end{bmatrix}
    \begin{bmatrix}
     f_{t+1} \\
     x_{t+1}^{m_1} \\
     v_{t+1}^{m_1} \\
     x_{t+1}^R \\
     v_{t+1}^R \\
     \omega_{t+1}^R \\
     r_{t+1}
    \end{bmatrix} = 
    \begin{bmatrix}
        0 \\
        0 \\
        c_1  \\
     v_{t}^R \\
     x_{t}^R \\
     \omega_{t}^R \\
     r_{t}
    \end{bmatrix} \\
    \boldsymbol{B}=\begin{bmatrix}
        \hat{\Delta x_{t}}_x (\hat{\Delta x_{t}}_x & \hat{\Delta x_{t}}_y & \hat{\Delta x_{t}}_z) \\
                \hat{\Delta x_{t}}_y (\hat{\Delta x_{t}}_x & \hat{\Delta x_{t}}_y & \hat{\Delta x_{t}}_z) \\
                \hat{\Delta x_{t}}_z (\hat{\Delta x_{t}}_x & \hat{\Delta x_{t}}_y & \hat{\Delta x_{t}}_z) 
    \end{bmatrix}, \: \boldsymbol{C}=\begin{bmatrix}
        1 & 0 & 0 & 1 \\
        0 & 1 & 0 & [\omega_t^R \times] \\
        0 & 0 & 0 & 0
    \end{bmatrix}
\end{align*}
where $x^{m_1}$ and $x^R$ are the positions of the left end-point of the spring (denoted as $m_1$) and the rod, $r$ is the torque arm from $m_1$ to rod's center of mass, $v^{m_1}$ is the linear velocity of $m_1$, $v^R$ is the rod linear velocity, $\omega^R$ is the rod angular velocity, $\hat{\Delta x}$ is the spring direction, $l^{rest}$ is the spring rest-length, $f$ is the spring force under Hooke's law, $K$ is the spring stiffness, $k$ is the spring damping coefficient, $m$ is the rod mass, $I^{-1}$ is the inverse of the rod inertia matrix, $[\omega_t^R \times]$ and $[r_t \times]$ are skew-symmetric matrices of $\omega_t^R$ and $r_t$, $c_1 = K (x_{t}^{m_2} + l_{t}^{rest}) + k (v_{t}^{m_2}  \cdot \hat{\Delta x_{t}}) * \hat{\Delta x_{t}}$ and $m_2$ is the other spring end-point, $\boldsymbol{a}=\Delta t(1/m,0,I_{t}^{-1} [r_{t} \times],0)^T$, and $\boldsymbol{e}_1=(1,0,0)^T$.

\vspace{-1mm}
\subsection{Progressive Training via Implicit/Semi-implicit Integr.}
\vspace{-1mm}
The objective of progressive training is to mitigate the frequency discrepancy between the training data and the physics engine. Instead of 2-way coupling ~\cite{geilinger2020add,shinar2008two}, which applies different integration methods for soft and rigid bodies, this work proposes a progressive training method for fast and stable convergence. The idea is to train the engine using implicit integration first in a time-stepping way to find a coarse grained model first, and then fine tune with semi-implicit integration for a more accurate model. The implicit integration has a large region of absolute stability in terms of parameters and time step but it is computationally expensive and reduces system energy steadily over time. Fine tuning with semi-implicit integration is fast and more accurate while it maintains system energy. The proposed progressive training approach takes advantage of both schemes' benefits. In order to achieve a correct gradient, a variation of semi-implicit integration is also proposed, named as adaptive naive integration (ANI), which is described in the appendix.

The progressive training algorithm is shown in Alg.~\ref{alg:progressive_training}, where $T$ is the sampling interval, $lr$ is the learning rate, and $\Delta t_r$ is the engine's simulation step. The engine is first trained recurrently with implicit integration (line 3-10), starting with a large time step and a large learning rate. Once the validation loss increases, the time step reduces gradually for a more precise simulation until the time step agrees with the physics engine's frequency (line 6-7). Once the training time step agrees with the frequency, the learning rate can reduce to stabilize the parameters and achieve a coarse grained model (line 8-9). Then, this model is fine-tuned with semi-implicit integration (line 12-20) and the learning rate keeps reducing (line 17-18), until a stable accurate model is achieved.

\begin{center}
\vspace{-.2in}
\begin{algorithm}
\caption{Progressive Training Algorithm}
\label{alg:progressive_training}
\textbf{Input}: Sparse Sampled Trajectory\\
\textbf{Parameter}: $T=100ms, lr=0.1, \Delta t_r=1ms$ \\
\textbf{Output}: Identified System Parameters\\
\vspace{-.2in}
\begin{algorithmic}[1] 
\STATE Let $\Delta t=T$.
\WHILE{$lr > 10^{-4}$}
\STATE Train engine recurrently with implicit integration.
\IF {validation loss is reducing}
\STATE continue
\ELSIF {$\Delta t > \Delta t_r$} 
\STATE $\Delta t = \max(\Delta t / 2, \Delta t_r)$
\ELSE 
\STATE $lr = lr / 2$
\ENDIF
\ENDWHILE
\STATE $lr=0.1, \Delta t = \Delta t_r$
\WHILE{$lr > 10^{-4}$}
\STATE Train recurrently with semi-implicit integration.
\IF{validation loss is reducing}
\STATE  continue
\ELSE
\STATE $lr = lr / 2$
\ENDIF
\ENDWHILE
\end{algorithmic}
\end{algorithm}
\vspace{-5mm}
\end{center}
\vspace{-1mm}
\subsection{Differentiable Collision Checker}
\vspace{-1mm}
The motion of the robot also depends upon the reaction forces and the friction between the robot and the ground. Thus, rich contacts should be modeled properly to simulate robot motions. In principle, the physics engine could use an off-the-shelf collision checker. The requirement for training the engine in a recurrent fashion and backpropagating the loss through the engine means that the collision checker should also be differentiable. An accompanying appendix provides an analysis of the gradients at contact points to show that regardless if the collision checker is differentiable or not, its output does not affect the computation of a correct gradient direction. A differentiable collision checker, however, can still accelerate the training process.

\vspace{-1mm}
\section{Experiments}
\vspace{-1mm}
The experiments use a model of the SUPERballBot tensegrity robot platform~\cite{Vespignani:2018:DSCT} simulated in the MuJoCo engine as the ground truth system. The state $X_t$ is 72-dimensional since there are 6 rigid rods for which the 3D position $\boldsymbol{p_t}$, linear velocity $\boldsymbol{v_t}$, orientation $\boldsymbol{q_t}$ and angular velocity $\boldsymbol{\omega_t}$ are tracked over time. The space of control signals $U_t$ is 24-dimensional, since there are 24 cables and it is possible to control the target length for each of them individually. 

The task is to estimate the robot's parameters, i.e., spring stiffness, damping, rod mass, and contact parameters, i.e., reaction and friction coefficients. The evaluation compares the predicted trajectories of the robot's center of the mass (CoM) by the differential engine against the ground truth CoM trajectory for the same controls. To generate control, this work evaluate multiple sample-based model predictive control (MPC) algorithms. The objective of the controller is to achieve a desired direction for the platform's CoM. 

The ground truth system on MuJoCo has been setup to mimic empirical motions of the real system at NASA Ames~\cite{Vespignani:2018:DSCT} and reflect the NTRT tensegrity robot simulator~\cite{NTRTSim}: rod mass $m$ is 10kg, length $2r$ is 1.684m the cable rest length $l^{rest}$ is 0.95m, the stiffness $K$ is 10,000 and damping $k$ is 1,000; the activation dynamics type is filter, the control range is $[-100, 100]$, and the gain parameter is 1,000. All sliding, torsional and rolling frictions are enabled with coefficient equal to 1.  A non-contact and a contact environment were setup as shown in Fig.~\ref{fig:intro}. For each environment, 10 trajectories were samppled from MuJoCo for training, 2 for validation and 10 for testing. All trajectories are 5 seconds long. The sampling frequency is 10Hz, however, the engine's frequency is 1000Hz.

\vspace{-1mm}
\subsection{Ablation Study}
\vspace{-1mm}

Ablation experiments evaluate the proposed components and show that: 1) recurrent training with implicit integration outperforms feed forward training with semi-explicit integration~\cite{wang2020end}; 2) progressive training outperforms implicit integration only training; 3) differentiable collision detection is faster than non-differentiable one.

\subsubsection{Necessity of Recurrent Training}
A naive idea to mitigate the frequency gap between simulation and sensing updates is to reduce the simulation frequency. In the considered setup, the simulation time step would increase from 1ms to 100ms, same to the sampling interval of ground truth trajectories, so as to avoid recurrent training. Fig.~\ref{fig:intro} (top) shows that this may miss important collision states. Even without collisions, this is still problematic. For instance, consider a non-contact environment where the robot is held in the air and executes random controls, as in Fig.~\ref{fig:intro} (bottom-left). we compare: a) training the differential engine in a feed-forward fashion with a fixed 100ms time steps and b) using the time-stepping recurrent training (lines 1-11 of Alg.~\ref{alg:progressive_training}). Because of the large time step, both of them apply implicit integration for stable simulation. Fig.~\ref{fig:100ms_vs_time_stepping} (left) shows that the proposed recurrent training outperforms feed-forward training~\cite{wang2020end}, since its mean square error (MSE) over the CoM prediction is much lower. 

\begin{figure}[t]
\centering
\begin{subfigure}[b]{0.49\textwidth}
\centering
\includegraphics[width=0.47\linewidth]{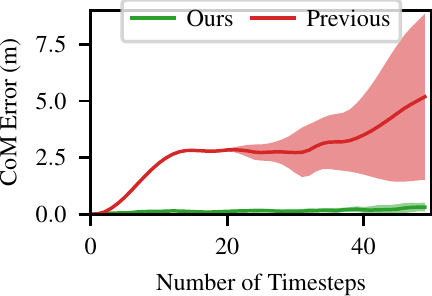}
\includegraphics[width=0.47\linewidth]{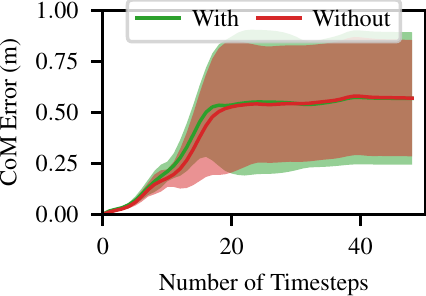}
\vspace{-4mm}
\caption{}
\label{fig:100ms_vs_time_stepping}
\end{subfigure}
\hfill
\begin{subfigure}[b]{0.49\textwidth}
\centering
\includegraphics[width=0.47\linewidth]{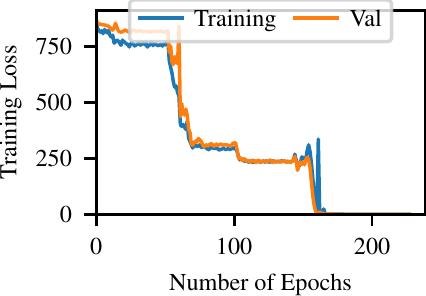}
\includegraphics[width=0.49\linewidth]{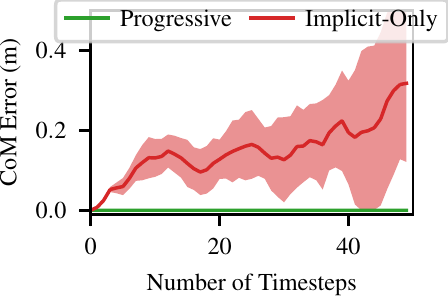}
\vspace{-4mm}
\caption{}
\vspace{-2mm}
\label{fig:implicit_vs_progressive}
\end{subfigure}
\caption{(a) Left: A previously proposed feed-forward  engine~\cite{wang2020end} exhibits large errors when trained over sparsely sampled trajectories (red line). The proposed recurrent process achieves significantly lower error (green line). Right: In the rolling environment, the predicted trajectory has similar performance w. or w/o the diff. collision checker. (b) Left: With implicit integration, the training loss stops decreasing after 100 epochs. After switching to semi-implicit integration at 151 epoch, the training loss drops sharply. Right: Progressive training results in a CoM error of magnitude 1e-5, which significantly outperforms implicit-only integration. The rod length is 1.682m. The average trajectory length is 2.57m. The error cloud shows the standard derivation.}
\vspace{-7mm}
\end{figure}

\subsubsection{Improvement of Progressive Training}

Previous efforts~\cite{qiao2020scalable,geilinger2020add} claimed good performance with implicit-integration only. This section shows that the proposed combination of implicit and semi-implicit integration significantly improves performance. The same task as in the above section above, where the process first trains with implicit-integration only and then fine-tunes with semi-implicit integration.

The comparison of implicit-only and progressive training is shown in Figure~\ref{fig:implicit_vs_progressive}. The first 150 epochs are from implicit-only training. The training loss keeps going down as the simulation step $\Delta t$ gets smaller. After 100 epochs even for $\Delta t = 1ms$, the loss is still very high. After switching to semi-implicit integration at epoch 151, the loss curve drops sharply since the semi-implicit Euler method conserves energy while the implicit method reduces it, which adds damping. Even though the implicit-only training cannot converge to the best parameters, it converges to a coarse solution good enough for stable training with semi-implicit integration.

\subsubsection{Speed up due to the Diff. Collision Checker}

Since a sparsely sampled trajectory may skip important contact states, the recurrent physics engine needs a collision checker that compensates for such information. But whether this collision checker should also be differentiable needs verification. This experiment trains the engine with a differentiable collision checker and, for comparison, with MuJoCo's non-differentiable collision checker. The trajectories involve the robot rolling on the ground as in Fig.~\ref{fig:intro} (right) along random directions and with random speed. The rolling trajectory includes multiple collisions between the rod and the ground.


Fig~\ref{fig:100ms_vs_time_stepping} (right) shows that for the same training setting, the engine trained with a differentiable collision checker and the engine trained with a traditional one have the same performance. This also means that the proposed engine can benefit from any off-the-shelf collision checker. This can save work from developing collision primitives for new objects. Nevertheless, the drawback of the traditional collision checker relates to training speed. Training with the differentiable collision checker on GPUs ($126\pm7s$ per epoch) is about 10x faster than using MuJoCo's collision checker on CPUs ($1296\pm14s$ per epoch). Although the experiment used a pool with 10 MuJoCo instances in parallel, training with a MuJoCo collision checker was significantly slower. The collision checking models rods as capsules, which is a computationally efficient primitive.  

\vspace{-1mm}
\subsection{System Identification with Sparse Trajectories}
\vspace{-1mm}

\begin{figure}[t]
\centering
\begin{subfigure}[b]{0.49\linewidth}
\centering
\includegraphics[width=\linewidth]{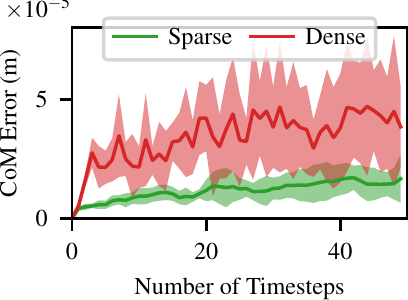}
\end{subfigure}
\begin{subfigure}[b]{0.49\linewidth}
\centering
\includegraphics[width=\linewidth]{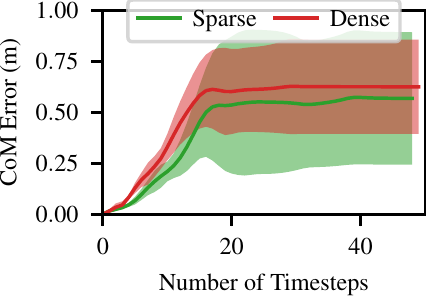}
\end{subfigure}
\vspace{-.2in}
\caption{(left) The proposed engine achieves smaller CoM position error in non-contact environments when trained on sparse trajectories, which contain only 1\% of data points of dense trajectories, relative to previous work~\cite{wang2020end} trained on dense trajectories. (right) Even when using only 1\% of data points, the proposed engine achieves comparable error regarding the CoM position in the contact environment. }
\label{fig:sparse_vs_dense}
\vspace{-8mm}
\end{figure}

This section compares the proposed training using sparsely sampled trajectories against the alternative~\cite{wang2020end} is trained with dense trajectories. The proposed solution achieves better performance at 100 times smaller sampling frequency, which amounts to only 1\% of data. 10 trajectories were sampled from MuJoCo for training, 2 for validation and 10 for testing. All trajectories are 5 seconds long. For the sparse trajectory dataset, data points $X_t$ are sampled every 100ms, then each trajectory has 50 data points. For the dense trajectory dataset, data points are sampled every 1ms, which is the same as the simulation step, then each trajectory has 5,000 data points, i.e., the sparse trajectory contains only 1\% of data points of the dense trajectory. The evaluation is performed separately for non-contact trajectories and contact trajectories.


Fig.~\ref{fig:sparse_vs_dense} shows that the proposed engine with sparse data can achieve similar and even better CoM error. The recurrent training process can average out the noise over smaller simulation steps, resulting to a more robust and accurate model. The prediction for non-contact trajectories has very low error with magnitude 10e-5, and on trajectories with contacts has error approx. 0.58m after 5,000 time steps. For the contact environment, the CoM error is divided by trajectory length to get CoM relative error. Considering the average trajectory length is 2.35m, the relative error is in the order of 24.7\% of its length. The robot starts with a random initial speed and stops in a range between 100 to 2,000 time steps. After approx. 2,000 time steps, the CoM error curve becomes flat because the ball stops rolling. The CoM error arises from the simplification of the contact model. This simplification reduces the need for training data, but also impacts identification accuracy, which is a trade-off between data requirements and model complexity. The error with contacts appears high but the objective is to achieve a data-efficient process for finding an explainable model for the target system that is accurate enough to train control policies that can be transferred back to the ground truth system. The following sections demonstrate this success criterion.
\vspace{-1mm}
\subsection{Sim2Sim MPC Policy Transfer}
\vspace{-1mm}
The goal is to train effective control policies in the engine that can be transferred to the original system painlessly. This work demonstrates sim2sim transfer given the MuJoCo simulator as a ground-truth system. Trajectories are sampled from it, and the proposed physics engine is used to identify critical parameters using these input trajectories. Then a policy is trained on the identified engine. The same policy generation is applied on MuJoCo and the comparison evaluates if the policies agree when executed on MuJoCo and how much training data are needed in either case. 

\begin{figure}[t]
    \centering
    \includegraphics[width=0.9\linewidth]{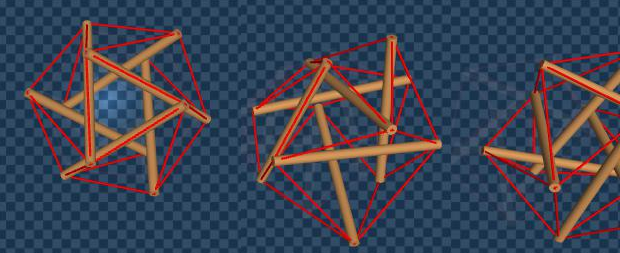}
    \vspace{-.05in}
    \caption{The SUPERball rolling task is to control the SUPERBall bot so as to move along the X axis (to the right) with speed $1m/s$.}
    \label{fig:superball_rolling_task}
    \vspace{-.25in}
\end{figure}

\begin{figure}[h]
\vspace{-.1in}
    \centering
    \includegraphics[width=0.9\linewidth]{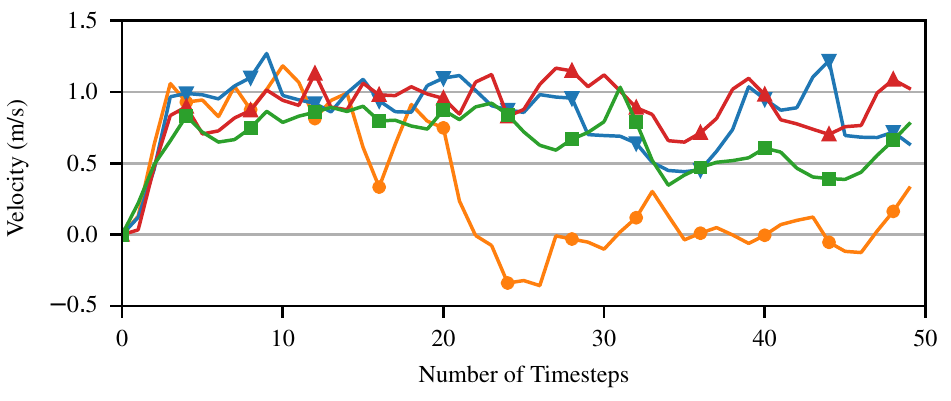}
    \includegraphics[width=0.9\linewidth]{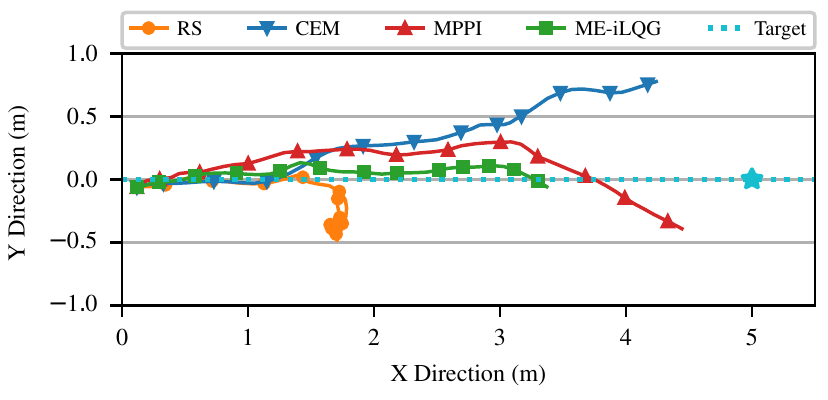}
    \vspace{-.1in}
    \caption{MPC algorithms evaluation on SUPERball bot. The task is to generate controls for 50 time steps (5 sec.) to roll the robot along the X direction at a target velocity of $1m/s$. The closer to the target velocity the better. (top) MPPI and CEM are most close to the target velocity. (bottom) The robot starts at $(0, 0)$ and the target position is at $(5, 0)$.  The trajectories of MPPI and CEM are closer along the X axis and end close to the target.}
    \vspace{-.15in}
    \label{fig:mpc_comparison}
\end{figure}

The first step is to identify a controller that is a good choice for the tesnegrity robot. The SUPERball rolling task shown in Fig.~\ref{fig:superball_rolling_task} is used to evaluate sampling-based MPC algorithms in MuJoCo that use random shooting (RS)~\cite{brooks1958discussion}, cross-entropy method (CEM)~\cite{botev2013cross}, model predictive path integral (MPPI) control~\cite{williams2017model} or max entropy iLQG (ME-iLQG)~\cite{levine2014learning}. The task is to roll the robot along the x axis with a velocity of $1m/s$. Each policy has $50$ time steps, and each time step is $0.1s$. The start  is at $(0, 0)$ and the target is $(5, 0)$.  MPPI and CEM generate the best policies among all algorithms. MPPI is slightly better since its trajectory ends closer to the target position and is the one selected. Equipped with this information, the next step is to evaluate whether the MPPI variant of MPC can be trained on the diff. physics engine and then transferred successfully on MuJoCo.

\subsubsection{Random v.s Identified Parameters}
MPPI is first trained and executed on MuJoCo to get a ground truth policy. The policy transfer involves training MPPI on the diff. engine and then executing it on MuJoCo. This section evaluates if the identification process is necessary for the transfer. Two diff. physics engines are compared with different parameters for stiffness, damping, mass, friction, etc.: 1) random parameters; 2) parameters identified with the recurrent training process. The engines share the same robot topology and dimensionality. MPPI is trained on these engines to get 2 sets of policies, which are executed on MuJoCo to compare with the ground truth policy. Fig~\ref{fig:policy_transfer_comparison} shows that the trajectories of the policy trained with random parameters doesn't match the policy trained on MuJoCo, since the achieved velocity is almost zero and the robot is stuck at the origin. The policy trained on the identified engine, however, is very close to the ground truth policy and achieves a velocity close to $1m/s$.

\begin{figure}[t]
    \centering
    \includegraphics[trim={0 0 0 4mm}, clip, width=0.9\linewidth]{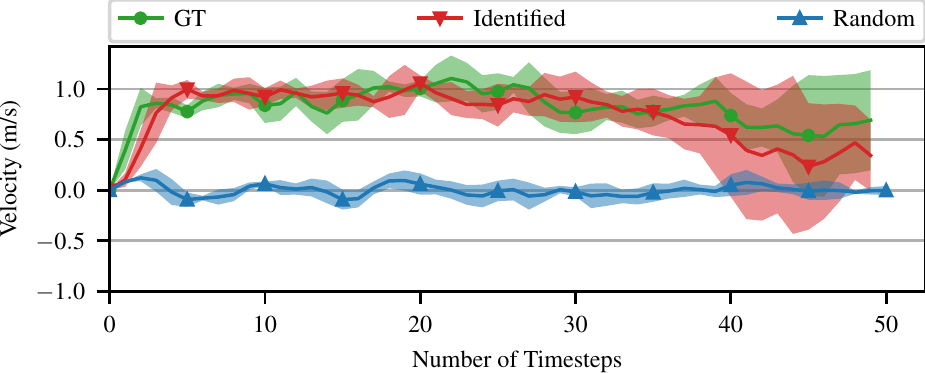}
    \includegraphics[width=0.9\linewidth]{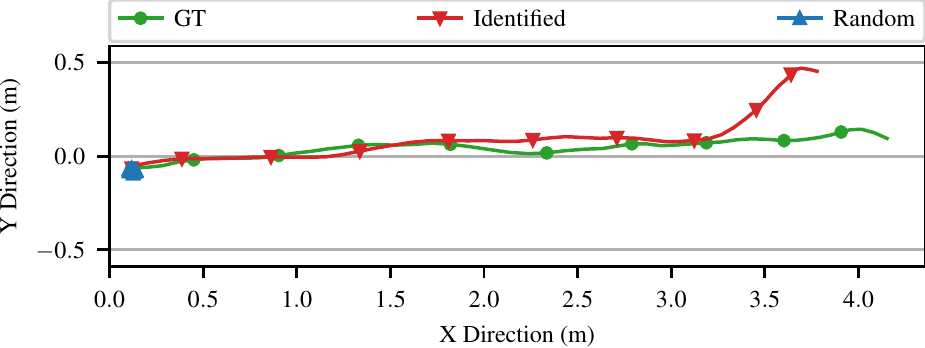}
    \vspace{-0.05in}
    \caption{Multiple trajectories are generated on MuJoCo from policies trained on the ground-truth (GT) system, the identified engine and an engine with random parameters. The policy learned on the identified engine has similar output (top:velocity,bottom:trajectory) on MuJoCo with the policy trained on MuJoCo, while the other policy does not result in motion of the CoM.}
    \label{fig:policy_transfer_comparison}
    \vspace{-.25in}
\end{figure}

\subsubsection{Data Requirement} Each policy has 50 controls. For each control, MPPI runs 5 iterations, samples 40 trajectories in each iteration and each trajectory is 1sec, i.e. 1,000 time steps. The sampling interval is 100ms. If the policy is directly trained on the ground truth system, then it requires: $50\times5\times40\times1,000/100=100K$ data points. The diff. engine only needs 10 5 sec. trajectories to identify non-contact parameters and 10 5sec. trajectories for contact ones, i.e. $(10+10)\times5,000/100=1000$ data points, which is only \textbf{1\%} of $100K$. It takes around 8 hours to train the recurrent engine and generate the control policy, which happens offline, and replaces the collection of real world data with compute.



\section{Conclusion}

This paper proposes a recurrent differentiable physics engine to identify parameters of tensegrity robots. This engine is data efficient, explainable and robust even with low-frequency training data. It can be used to train a controller and transfer it to the ground truth system without adaptation. The next step is to train the engine with sensing data for a real robot and use it to learn controllers, where uncertainty and partial observability must be also addressed.




\clearpage


\bibliographystyle{IEEEtrans}
\bibliography{references}

\newpage
\newpage
\section*{Appendix}
\section{Implicit Integration Derivation}
\begin{figure}[th]
\centering
\includegraphics[width=0.5\linewidth]{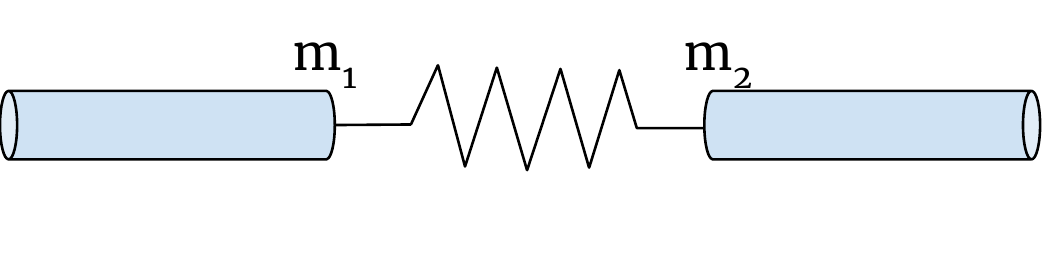}
\caption{Simple Spring Rod Model for Implicit Integration}
\label{fig:simple_spring_rod_model}
\end{figure}
Every spring-rod system can be considered a connection of basis elements: the spring and the rod. Instead of deriving the implicit integration for the whole system, we do for the single element, which is simpler and more straightforward. Considering a spring connecting two rods as shown in Fig.~\ref{fig:simple_spring_rod_model}, the dynamics with implicit integration is
\begin{align*}
    x_{t+1}^{m_1} &= x_{t+1}^R + r_{t+1} \\
    v_{t+1}^{m_1} &= v_{t+1}^R + \omega_{t+1}^R \times r_{t+1} \\
    \Delta x_{t+1} &= x_{t+1}^{m_1} - x_{t+1}^{m_2} \\
    \hat{\Delta x_{t+1}} &= \Delta x_{t+1} / ||\Delta x_{t+1}|| \\
    \Delta v_{t+1} &= v_{t+1}^{m_1} - v_{t+1}^{m_2} \\
    l_{t+1}^{rest} &= (l^{rest} + w_{t+1} * c) * \hat{\Delta x_{t+1}}\\
    v_{t+1}^{proj} &= (\Delta v_{t+1} \cdot \hat{\Delta x_{t+1}}) * \hat{\Delta x_{t+1}} \\
    f_{t+1} &= K (\Delta x_{t+1} - l_{t+1}^{rest}) + k v_{t+1}^{proj} \\
    \tau_{t+1} &= r_{t+1} \times f_{t+1} \\
    v_{t+1}^R &= v_t^R + \frac{f_{t+1}}{m} \Delta t \\
    x_{t+1}^R &= x_{t}^R + v_{t+1}^R \Delta t  \\
    \omega_{t+1}^R &= \omega_t^R + I_{t}^{-1} \tau_{t+1} \Delta t \\
    r_{t+1} &= r_t + \omega_{t+1}^R \Delta t
\end{align*}
where $x^{m_1}$ is the position of $m_1$, $x^R$ is the position of the rod on the left, $r$ is the torque arm from $m_1$ to rod's center of mass, $v^{m_1}$ is the linear velocity of $m_1$, $v^R$ is the rod linear velocity, $\omega^R$ is the rod angular velocity, $\Delta x$ is the spring vector, $||\Delta x||$ is 2-norm of spring vector, i.e. spring length, $\hat{\Delta x}$ is the spring direction, $\Delta v$ is the relative velocity of two ends of the spring, $l^{rest}$ is the spring rest length, $w$ is the motor position, $c$ is the motor position scaler, $v^{proj}$ is the projection of relative velocity $v$ onto the spring direction $\hat{\Delta x}$, $f$ is the spring force under Hooke's law, $K$ is spring stiffness, $k$ is spring damping, $\tau$ is the torque of spring force onto the rod, $m$ is rod mass, and $I^{-1}$ is the inverse of rod inertia matrix. $t, t+1$ mean the current or next time step.

Solving the above 13 equations directly is impossible, since we only have 13 equations but all terms with subscript $t+1$ are unknown, which are way more than 13. Besides, the quadratic terms like $ \omega_{t+1}^R \times r_{t+1}, r_{t+1} \times f_{t+1}$ make the whole system nonlinear.

So we simplify the above equations by replacing part of unknown $t+1$ terms by $t$ terms, which conserve the stable feature of the implicit-integration but much easier to solve. The simplified implicit integration is
\begin{align*}
    x_{t+1}^{m_1} &= x_{t+1}^R + r_{t+1} \\
    v_{t+1}^{m_1} &= v_{t+1}^R + \omega_{t}^R \times r_{t+1} \\
    \Delta x_{t+1} &= x_{t+1}^{m_1} - x_{t}^{m_2} \\
    \hat{\Delta x_{t}} &= \Delta x_{t} / ||\Delta x_{t}|| \\
    \Delta v_{t+1} &= v_{t+1}^{m_1} - v_{t}^{m_2} \\
    l_{t}^{rest} &= (l^{rest} + w_{t} * c) * \hat{\Delta x_{t}}\\
    v_{t+1}^{proj} &= (\Delta v_{t+1} \cdot \hat{\Delta x_{t}}) * \hat{\Delta x_{t}} \\
    f_{t+1} &= K (\Delta x_{t+1} - l_{t}^{rest}) + k v_{t+1}^{proj} \\
    \tau_{t+1} &= r_{t} \times f_{t+1} \\
    v_{t+1}^R &= v_t^R + \frac{f_{t+1}}{m} \Delta t \\
    x_{t+1}^R &= x_{t}^R + v_{t+1}^R \Delta t  \\
    \omega_{t+1}^R &= \omega_t^R + I_{t}^{-1} \tau_{t+1} \Delta t \\
    r_{t+1} &= r_t + \omega_{t+1}^R \Delta t
\end{align*}
in which all quadratic terms are gone and only have less unknown variables.

Now let's convert above equations to format like $Ax=b$,
\begin{align*}
    f_{t+1} &= K (\Delta x_{t+1} - l_{t}^{rest}) + k v_{t+1}^{proj} \\
            &= K (x_{t+1}^{m_1} - x_{t}^{m_2} - l_{t}^{rest}) + k \Delta v_{t+1} \cdot \hat{\Delta x_{t}} * \hat{\Delta x_{t}} \\
            &= K x_{t+1}^{m_1} - K (x_{t}^{m_2} + l_{t}^{rest}) + k (v_{t+1}^{m_1} - v_{t}^{m_2} ) \cdot \hat{\Delta x_{t}} * \hat{\Delta x_{t}} \\
            &= K x_{t+1}^{m_1} - K (x_{t}^{m_2} + l_{t}^{rest}) \\
            & + k 
            \begin{bmatrix}
                \hat{\Delta x_{t}}_x & 0 & 0 \\
                0 & \hat{\Delta x_{t}}_y & 0 \\
                0 & 0 & \hat{\Delta x_{t}}_z
            \end{bmatrix}
            \begin{bmatrix}
                \hat{\Delta x_{t}}_x & \hat{\Delta x_{t}}_y & \hat{\Delta x_{t}}_z \\
                \hat{\Delta x_{t}}_x & \hat{\Delta x_{t}}_y & \hat{\Delta x_{t}}_z \\
                \hat{\Delta x_{t}}_x & \hat{\Delta x_{t}}_y & \hat{\Delta x_{t}}_z 
            \end{bmatrix}
            v_{t+1}^{m_1} \\
            & - k (v_{t}^{m_2}  \cdot \hat{\Delta x_{t}}) * \hat{\Delta x_{t}} \\
\end{align*}
i.e.
\begin{align*}
    &K x_{t+1}^{m_1}  - f_{t+1}\\
    &+ k 
            \begin{bmatrix}
                \hat{\Delta x_{t}}_x & 0 & 0 \\
                0 & \hat{\Delta x_{t}}_y & 0 \\
                0 & 0 & \hat{\Delta x_{t}}_z
            \end{bmatrix}
            \begin{bmatrix}
                \hat{\Delta x_{t}}_x & \hat{\Delta x_{t}}_y & \hat{\Delta x_{t}}_z \\
                \hat{\Delta x_{t}}_x & \hat{\Delta x_{t}}_y & \hat{\Delta x_{t}}_z \\
                \hat{\Delta x_{t}}_x & \hat{\Delta x_{t}}_y & \hat{\Delta x_{t}}_z 
            \end{bmatrix}
            v_{t+1}^{m_1} \\
    & = K (x_{t}^{m_2} + l_{t}^{rest}) + k (v_{t}^{m_2}  \cdot \hat{\Delta x_{t}}) * \hat{\Delta x_{t}} 
\end{align*}

Transform all equations to form $Ax=b$
\begin{align*}
    x_{t+1}^{m_1} - x_{t+1}^R - r_{t+1} &= 0\\
    v_{t+1}^{m_1} - v_{t+1}^R  - [\omega_{t}^R \times] r_{t+1} & = 0\\
        K x_{t+1}^{m_1}  + k 
            \begin{bmatrix}
                \hat{\Delta x_{t}}_x & 0 & 0 \\
                0 & \hat{\Delta x_{t}}_y & 0 \\
                0 & 0 & \hat{\Delta x_{t}}_z
            \end{bmatrix}
            \begin{bmatrix}
                \hat{\Delta x_{t}}_x & \hat{\Delta x_{t}}_y & \hat{\Delta x_{t}}_z \\
                \hat{\Delta x_{t}}_x & \hat{\Delta x_{t}}_y & \hat{\Delta x_{t}}_z \\
                \hat{\Delta x_{t}}_x & \hat{\Delta x_{t}}_y & \hat{\Delta x_{t}}_z 
            \end{bmatrix}
            v_{t+1}^{m_1} &\\
            = f_{t+1} + K (x_{t}^{m_2} + l_{t}^{rest}) + k (v_{t}^{m_2}  \cdot \hat{\Delta x_{t}}) * \hat{\Delta x_{t}} &\\
                v_{t+1}^R - \dfrac{\Delta t} {m} f_{t+1} &= v_t^R\\
    x_{t+1}^R - \Delta t v_{t+1}^R &= x_{t}^R \\
    \omega_{t+1}^R - \Delta t I_{t}^{-1} [r_{t} \times] f_{t+1} &= \omega_t^R  \\
    r_{t+1} - \omega_{t+1}^R \Delta t &= r_t \\
\end{align*}


\section{Gradients at Discontinuities}

Research has shown that time discretization can result in the computation of wrong gradients at collision point~\cite{hu2019difftaichi}. Continuous collision detection (CCD) is a possible solution to circumvent this problem~\cite{hu2019difftaichi,qiao2020scalable}. Nevertheless, CCD is expensive and has many limitations. This work proposed a simple alternative solution that circumvents CCD.

Consider a rigid ball example ~\cite{hu2019difftaichi}, where a rigid ball elastically collides with a friction-less ground, as shown in Figure~\ref{fig:rigid_ball_example}. Lowering the initial ball height will increase the final ball height, since there is
less distance to travel before the ball hits the ground and more after (see the loss curves in Fig.~\ref{fig:rigid_ball_example},  right), i.e., $\partial x_{T} / \partial x_{0} = -1$, where:
\begin{align*}
    \dfrac{\partial x_{T}}{\partial x_{0}} = \prod_{t=0}^{T-1} \dfrac{\partial x_{t+1}}{\partial x_{t}}
\end{align*}
For other time steps before and after collision:
\begin{align*}
    v_{t+1} &= v_{t}; \\
    x_{t+1} &= x_{t} + v_{t+1} \Delta t; \\
    \dfrac{\partial x_{t+1}}{\partial x_t} &= 1.
\end{align*}
Thus, assume the collision happens at $t$, then
\begin{align*}
    v_{t+1} &= -v_t\\
    \dfrac{\partial x_{T}}{\partial x_{0}} &= \prod_{t=0}^{T-1} \dfrac{\partial x_{t+1}}{\partial x_{t}} =  \dfrac{\partial x_{t+1}}{\partial x_{t}}.
\end{align*}
thus in the following sections, we only focus on $ \dfrac{\partial x_{t+1}}{\partial x_{t}}$.
\begin{figure}[ht]
\centering
\includegraphics[height=0.22\linewidth]{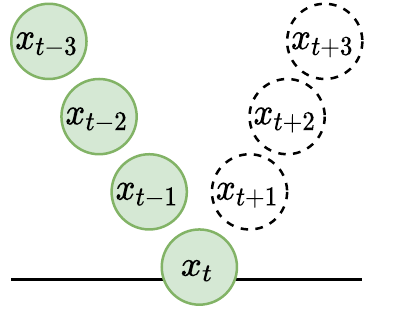}
\includegraphics[trim=10 20 5 15, clip, height=0.24\linewidth]{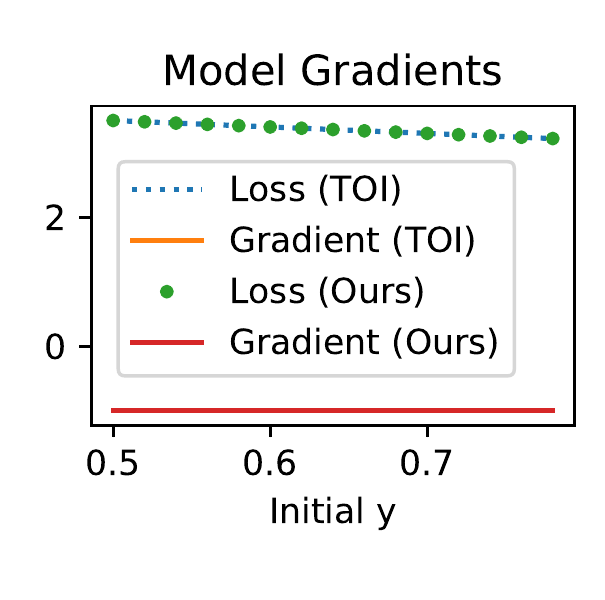}
\caption{left: The rigid ball drops onto the ground at time step $t$ and bound up. right: Our adaptive naive integration (ANI) can get correct gradient and loss as previous time of impact (TOI) integration.} 
\label{fig:rigid_ball_example}
\end{figure}

\subsection{Gradient Analysis for Naive Integration and Time of Impact (TOI) Integration}
The authors~\cite{hu2019difftaichi} mentioned the naive integration would return wrong gradient at collision point and proposed Time of Impact (TOI) integration to circumvent this problem. But no theoretical analysis proposed about why. Here proposed the analysis here.

With naive time integration,
\begin{align*}
    v_{t+1} &= -v_{t} \\
    x_{t+1} &= x_{t} + v_{t+1} \Delta t \\
    \dfrac{\partial x_{t+1}}{\partial x_{t}} &= 1
\end{align*} Thus the gradient has the opposite direction.
However, considering CCD with time of impact (TOI), we have
\begin{align*}
    v_{t+1} &= -v_{t} \\
    TOI &= -x_{t} / v_{t} \\
    x_{t+1} &= x_{t} + v_{t} * TOI + v_{t+1} (\Delta t - TOI) \\
    \dfrac{\partial x_{t+1}}{\partial x_{t}} &= 1 + v_t*(-1/v_{t}) - v_{t} * (-1) * (-1/v_{t}) = -1
\end{align*}
Now the gradient is correct, since TOI introduces additional gradients to $x_t$.

\subsection{Adaptive Naive Integration (ANI) with Correct Gradient}
Instead of CCD and TOI, we proposed an adaptive naive integration (ANI) that can also return correct gradients:
\begin{align*}
    v_{t+1} &= -v_{t} - 2 x_{t}/\Delta t + 2 \widetilde{x_{t}}/\Delta t;\\
    x_{t+1} &= x_{t} + v_{t+1} \Delta t; \\
    \dfrac{\partial x_{t+1}}{\partial x_{t}} &= 1 + \dfrac{\partial v_{t+1}}{\partial x_{t}} \Delta t = 1 -2 = -1,
\end{align*}
where $\widetilde{x_t}$ has the same value to $x_t$ but does not back-propagate gradient. Fig. ~\ref{fig:rigid_ball_example} right shows our ANI can get same correct gradient and loss to TOI.

The proposed contact model can be viewed as a special impulse-based contact model~\cite{Goldstein:2001:CM}:
\begin{align*}
    v_{t+1} &= v_{t} + (-K x_t - k v_t) /m \Delta t \\
    v_{t+1} &= -v_{t} \\
    \dfrac{\partial x_{t+1}}{\partial x_{t}} &= -1 
\end{align*}
Solving these equations, the result is:
\begin{align*}
    K x_t \Delta t / m &= 2 x_t / \Delta t \\
    k v_t \Delta t /m &= -2 \widetilde{x_t}/ \Delta t + 2 v_t
\end{align*}

Consider now a more general case of the impulse-based contact model, $v_{t+1} = - e v_{t}$, where $e \in [0, 1]$.
\begin{theorem}
For impulse-based contact model, the gradient has a correct direction if $K/m > 1/\Delta t^2$ and $k/m = (1+e)/\Delta t + K \widetilde{x_t}/(mv_{t})$, where $e$ is restitution, $K < 0, k < 0$ are constants or functions about $x_t, v_t$ but don't propagate gradient.
\end{theorem}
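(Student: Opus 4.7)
The plan is to read the two hypotheses of the theorem as the two natural conditions one must impose on the single collision step of the impulse update so that it reproduces both the correct post-collision velocity and a correctly signed backpropagated gradient. As a first step, I would note that the preceding rigid-ball analysis already establishes $\partial x_{t+1}/\partial x_t = 1$ for every non-collision step, so that $\partial x_T/\partial x_0$ reduces to the single collision-step factor; it therefore suffices to analyze one step of the impulse update at the moment of contact.

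Next I would write this step out as
\begin{align*}
v_{t+1} &= v_t - \frac{K}{m}\,x_t\,\Delta t - \frac{k}{m}\,v_t\,\Delta t, \\
x_{t+1} &= x_t + v_{t+1}\,\Delta t,
\end{align*}
and, using the hypothesis that $K$ and $k$ are detached from the computational graph (i.e., do not propagate gradients through their state dependence), differentiate to obtain $\partial v_{t+1}/\partial x_t = -K\Delta t/m$ and hence $\partial x_{t+1}/\partial x_t = 1 - K\Delta t^2/m$. Requiring this collision-step factor to have the physically correct (negative) sign matching the time-of-impact reference immediately yields $K/m > 1/\Delta t^2$, which is the first hypothesis.

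For the second hypothesis, I would impose the value condition $v_{t+1} = -e\,v_t$ on the same update, rearrange to $(1+e)\,v_t = (K/m)\,x_t\,\Delta t + (k/m)\,v_t\,\Delta t$, and then solve algebraically for $k/m$. Dividing by $v_t\,\Delta t$ and using the convention that $\widetilde{x_t}$ equals $x_t$ in value but carries no gradient, I would recover the stated identity for $k/m$, up to the sign convention on $K$ and on the penetration coordinate that is implicit in writing $K<0,k<0$.

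The main obstacle I expect is precisely the bookkeeping between the value and the gradient roles of the state: the $x_t$ appearing inside the coefficient $k$ is algebraically the same variable as the $x_t$ that appears linearly in the update, yet it must be treated as a constant during differentiation so that the clean identity $\partial x_{t+1}/\partial x_t = 1 - K\Delta t^2/m$ survives — this is exactly what the tilde notation is designed to encode, and it is the step where the proof could silently go wrong if one forgets to detach. A secondary subtlety worth addressing in the write-up is the degeneracy $v_t \to 0$, at which the formula for $k/m$ is singular; this case should be excluded from the statement since an impulse-based reversal is physically ill-defined when the body is already at rest at the instant of contact.
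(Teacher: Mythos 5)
Your proposal is correct and follows essentially the same route as the paper's proof: differentiate the single impulse-update step (with $K$, $k$ detached) to get $\partial x_{t+1}/\partial x_t = 1 - K\Delta t^2/m$, require this to be negative for the first condition, then impose $v_{t+1}=-e\,v_t$ and solve for $k/m$ for the second. Your added remarks on the $v_t\to 0$ degeneracy and the sign bookkeeping (where a literal rearrangement gives $k/m=(1+e)/\Delta t - K\widetilde{x_t}/(mv_t)$, differing in sign from the stated identity) are reasonable caveats, but the underlying argument is the paper's.
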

\begin{proof}
$\ \ \ $
\vspace{-.2in}
\begin{align*}
    v_{t+1} &= v_{t} + (-K x_{t} - k v_t)/m \Delta t;\\
    x_{t+1} &= x_{t} + v_{t+1} \Delta t;\\
    \dfrac{\partial x_{t+1}}{\partial x_{t}} &= 1 - \Delta t^2 K/m
\end{align*}

In order to ensure the gradient has correct direction, the following condition should be satisfied:
\begin{align*}
    \dfrac{\partial x_{t+1}}{\partial x_{t}} &=  1 - \Delta t^2 K/m  < 0
\end{align*}
Thus, $K/m > 1/\Delta t^2$.
Since $v_{t+1} = -ev_{t}$:
\begin{align*}
     -e v_{t} &= v_{t} - K \widetilde{x_{t}}/m \Delta t - k v_{t}/m \Delta t\\
     k/m &= (1+e)/\Delta t + K \widetilde{x_t}/(mv_{t})
\end{align*}
\end{proof}

In the above rigid ball example, we can set $K=2m/\Delta t^2$ to ensure that the gradient is -1.

\subsection{Gradient Analysis with/without Differentiable Collision Checker}
The above analysis has shown that the impulse-based contact model could return correct gradients. Here we discuss whether collision checker is necessary for getting correct gradient direction. The collision checker returns the contact point and intersection distance. Assuming the ground level is $0$ in the rigid ball example, the intersection distance is $0-x_{t}=-x_{t}$. The dynamics at contact point is
\begin{align*}
    v_{t+1} &= v_{t} + (-K x_{t} - k v_{t}) /m \Delta t \\
    x_{t+1} &= x_{t} + v_{t+1} \Delta t
\end{align*}
For the system identification task to identify ground stiffness $K$, $\dfrac{\partial x_{t+1}}{\partial K}$ should be positive since the ball rebounds faster on a stiffer ground.
\begin{align*}
    \dfrac{\partial x_{t+1}}{\partial K} &= - x_{t}\Delta t^2/m > 0 
\end{align*}
which is always true since $x_{t}<0$ at the contact point. Since $x_{t}$ is not a function of $K$, whether the collision checker is differentiable will not affect the gradient correctness.

If there are multiple collisions in a time interval $T$, which probably happen during the recurrent training, will the gradients still have correct direction? Considering the rigid ball example, we assume there are two continues collisions at $t-1, t$. 

\begin{align*}
    v_{t} &= v_{t-1} -K x_{t-1}\Delta t /m -k v_{t-1}\Delta t/m\\
    x_{t} &= x_{t-1} + v_{t} \Delta t \\
    v_{t+1} &= v_{t} -K x_{t} \Delta t/m  -k v_{t} \Delta t/m\\
    x_{t+1} &= x_{t} + v_{t+1} \Delta t    
\end{align*}

If the collision checker is not differentiable, the gradient to $K$ is 
\begin{align*}
    \dfrac{\partial x_{t+1}}{\partial K} &= - \widetilde{x_{t}}\Delta t^2/m > 0 
\end{align*}
where $\widetilde{x_{t}}$ has same value to $x_t$ but doesn't back propagate gradients.

If the collision checker is differentiable, the gradient to $K$ is 
\begin{align*}
    \dfrac{\partial v_t}{\partial K} &= \dfrac{-x_{t-1} \Delta t}{m} > 0\\
    \dfrac{\partial x_{t}}{\partial K} &= - x_{t-1}\Delta t^2/m > 0 \\
    \dfrac{\partial v_{t+1}}{\partial K} &= \dfrac{-x_t \Delta t}{m} \dfrac{\partial x_t}{\partial K} + \dfrac{-k \Delta t}{m} \dfrac{\partial v_t}{\partial K} \\
    \dfrac{\partial x_{t+1}}{\partial K} &= \dfrac{\partial x_{t}}{\partial K} + \dfrac{\partial v_{t+1}}{\partial K} \Delta t \\
    &= (- \dfrac{x_{t}\Delta t^2}{m} + 1 - \dfrac{k\Delta t}{m}) \dfrac{\partial x_{t}}{\partial K} 
\end{align*}
where $x_{t+1}, x_{t} < 0$. Since $\Delta t=0.001s$, term $- \dfrac{x_{t}\Delta t^2}{m} + 1 - \dfrac{k\Delta t}{m}$ could rarely be negative which means wrong gradient direction. To guarantee correct gradient direction, we apply the detaching trick to stop gradient propagation through $v_t$ and $v_{t-1}$, (by detaching from computation graph in PyTorch), which leads to:
\begin{align*}
    v_{t} &= v_{t-1} -K x_{t-1}\Delta t /m -k \widetilde{v_{t-1}}\Delta t/m\\
    x_{t} &= x_{t-1} + v_{t} \Delta t \\
    v_{t+1} &= v_{t} -K x_{t} \Delta t/m  -k \widetilde{v_{t}} \Delta t/m\\
    x_{t+1} &= x_{t} + v_{t+1} \Delta t    
\end{align*}
where $\widetilde{v_{t-1}}, \widetilde{v_{t}}$ have same value to $v_{t-1}, v_t$, but doesn't backpropagate gradient.

Thus we can ignore the damping term $k$ to simplify the following gradient computation, 
\begin{align*}
    v_{t} &= v_{t-1} -K x_{t-1}\Delta t /m \\
    x_{t} &= x_{t-1} + v_{t} \Delta t \\
    v_{t+1} &= v_{t} -K x_{t} \Delta t/m  \\
    x_{t+1} &= x_{t} + v_{t+1} \Delta t    
\end{align*}
If the collision checker is differentiable, the gradient to $K$ is 
\begin{align*}
    \dfrac{\partial x_{t}}{\partial K} &= - x_{t-1}\Delta t^2/m > 0 \\
    \dfrac{\partial x_{t+1}}{\partial K} &=  \dfrac{\partial x_{t}}{\partial K} - \dfrac{x_{t}\Delta t^2}{m} \dfrac{\partial x_{t}}{\partial K} \\
    &= (- \dfrac{x_{t}\Delta t^2}{m} + 1) \dfrac{\partial x_{t}}{\partial K} > 0
\end{align*}
since $x_t < 0, x_{t-1} < 0$. With mathematical induction, we can easily get if there are multiple contacts in the time interval $[i, j]$, 
\begin{align*}
    \dfrac{\partial x_{j+1}}{\partial K} &= (- \dfrac{x_{j}\Delta t^2}{m} + 1) (- \dfrac{x_{j-1}\Delta t^2}{m} + 1) ... (- \dfrac{x_{i+1}\Delta t^2}{m} + 1) \dfrac{ -x_{i} \Delta t^2}{m} > 0
\end{align*}
since $x_i, x_{i+1}, ... , x_{j-1}, x_{j} < 0$. Thus no matter the collision checker is differentiable or not, we can always get correct gradient direction. Gradient clipping is necessary to avoid exploding gradients.

\end{document}